\newtheorem{proposition}{Proposition}
\newcommand{\RN}[1]{%
	\textup{\uppercase\expandafter{\romannumeral#1}}%
}
\begin{document}

	\title{Decision Tree Design for Classification in Crowdsourcing Systems}	
	\author{\IEEEauthorblockN{Baocheng Geng, Qunwei Li, Pramod K. Varshney\\}
		\IEEEauthorblockA{Department of Electrical Engineering and Computer Science, Syracuse University, NY, 13244, USA\\
			%line 2: name of organization, acronyms acceptable\\
			%line 3: City, Country\\
			%:
			\{bageng, qli33, varshney\}@syr.edu}}
	%\author{Nianxia Cao}
	\thanks{This work was supported in part
by NSF under Grant ENG 1609916, and in part by AFOSR under Grants FA9550-17-0313 and FA9550-16-1-0077.}
	\maketitle
		
	\begin{abstract}
%  In a crowdsourcing platform, we look at the test ordering issue when solving classification problems. Specifically, an object passes through a series of binary tests before it gets classified to a class. Since there are error probability with each test, our goal is to minimize the average miss classification rate by designing the sequence of tests that objects will pass through. Using entropy as a criterion, we build the testing algorithm level by level using a heuristic way. Further, worker assignment problem is studied when we are capable of recruiting a certain amount of person to improve the performance. Simulations show the efficiency of our proposed method.
In this paper, we present a novel sequential paradigm for classification in crowdsourcing systems.  Considering that workers are unreliable and they perform the tests with errors, we study the construction of decision trees so as to minimize the probability of mis-classification. By exploiting the connection between probability of mis-classification and entropy at each level of the decision tree, we propose two algorithms for decision tree design. Furthermore, the worker assignment problem is studied when workers can be assigned to different tests of the decision tree  to provide a trade-off between classification cost and resulting error performance. Numerical results are presented for illustration.
	\end{abstract}
	
% 	\begin{IEEEkeywords}
% 		Crowdsourcing, classification, task ordering, majority voting.
% 	\end{IEEEkeywords}

%Finally, we incorporate the costs of test in the construction of the testing algorithm.

%\input{FormulationHints_Qunwei}

\section{Introduction}
In recent work on classification in crowdsourcing systems, complex questions are often replaced by a set of simpler binary questions (microtasks) to enhance classification performance \cite{7747496, li2017optimal, 6784318,Li:2017:CRC:3055601.3055607}. This is especially helpful in situations where crowd workers lack expertise for responding to complex questions directly.  Each worker is given the entire set of questions in a batch mode and the workers provide their responses in the form of a vector. These binary questions can be posted as ``microtasks'' on crowdsourcing platforms like Amazon Mechanical Turk\cite{buhrmester2011amazon}. To improve classification performance in crowdsourcing systems, most of the works in the literature focus on enhancing the quality of individual tests, by designing fusion rules to combine decisions from heterogeneous workers \cite{dekel2009vox,ipeirotis2010quality,7747496, li2017optimal, 6784318,Li:2017:CRC:3055601.3055607}, and by investigating the assignment of different tests to different workers depending upon their skill level \cite{ho2013adaptive,roy2015task}. These problems have also been extended to budget-constrained environments to improve classification performance\cite{liu2012cdas,ho2012online,karger2011iterative}. 

In this paper, we present a new paradigm for classification in crowdsourcing systems in which binary questions (micro-tasks) are asked in a sequential manner. This novel sequential paradigm in terms of a decision tree has not been considered in the literature.  This paradigm provides the opportunity to order the sequence of tests for more efficient classification by reducing the number of questions asked on an average. Furthermore, we can obtain a trade-off in terms of cost (number of questions asked) and performance by performing task assignment and using only a subset of workers per node of the decision tree.  {Best performance with the decision tree paradigm can be achieved when all workers respond to every test in the decision tree.  However, as shown in this paper, the performance with the proposed worker assignment, where each worker only responds to one test as opposed to all the tests in the tree, is comparably when the number of workers is large.}

\textbf{Related work: }
Information theoretic methods have been used to construct efficient decision trees \cite{mitchell1997machine,hartmann1982application}. Classical algorithms utilize a top-down tree structure, such as ID3, C4.5, and CART \cite{quinlan1986induction,quinlan2014c4,breiman2017classification}. They categorize the objects at each node(test) into tree branches until a leaf is reached, and objects at this leaf are considered to belong to the same class. At each node, these algorithms search for a thresholding-based test on a certain attribute, such that the test can categorize the objects. ID3 and C4.5 construct the decision tree by maximizing the information gain at each node, which is defined as reduction in entropy. In CART, Gini impurity is minimized during test selection at each node.

The first strong assumption in traditional algorithms is that all the tests are error-free in determining whether or not an attribute exceeds the threshold.
In practical crowdsourcing systems, however, due to the noise in observing or measuring the attribute as well as human limitations, there exist errors and uncertainties when workers perform the tests. For objects belonging to different classes, the error probability corresponding to a specific test could also be different.
Existing algorithms do not address the concern that error probabilities of the tests play an important role in the design of decision trees.

Another limitation of these algorithms is the assumption of completely known information of object attributes to compute information gain and Gini impurity, i.e., probability $p(c_j|c_i,t)$  at node $t$, $c_j,c_i \in \mathcal C$, where $c_i$ is the correct class and $c_j$ is the result of the test. Even though some algorithms \cite{quinlan2014c4} can handle missing attributes information, they simply discard the missing attributes and use the remaining ones for decision tree construction. In the process of decision tree construction, they need to decide not only which attribute to use, but also the optimal threshold. The run time complexity goes up to $O(XY^2)$, where $X$ is the number of objects and $Y$ is number of attributes \cite{lim2000comparison}.
However, in practical crowdsourcing applications, we might not have the complete information $p(c_j|c_i,t)$. What we have are a limited number of tests (binary questions), and the corresponding test results. These above limitations of existing literature motivate the research results presented in this paper.

\textbf{Major contributions: }
Instead of assuming that each test in a decision tree is perfect, we consider the fact that there may be errors when tests are performed and develop an efficient algorithm to construct decision trees for the imperfect test scenario. The resulting tree is applicable to many practical problems including to classification performed by unreliable crowdsourcing workers. In our algorithm, the decision tree is constructed by utilizing a given set of tests, where each test gives a binary result $0$ or $1$ depending on which class the object belongs to. We do not assume a complete knowledge of $p(c_j|c_i,t)$. We provide performance guarantees in terms of the upper bound on probability of mis-classification (or the lower bound on probability of correct classification). {The time complexity of our algorithm is polynomial of $M$, which is the number of tests. Since $M$ is usually much smaller than $X$ and $Y$, our complexity is reduced significantly compared to other methods, e.g., the one proposed in \cite{lim2000comparison}.} After the decision tree is constructed, we employ it for classification via crowdsourcing. To reduce cost in terms of the number of questions asked while maintaining low probability of mis-classification,  we further develop an algorithm to efficiently assign workers to different tests, to obtain a trade-off between the probability of mis-classification and the cost of crowdsourcing.

\section{Decision Tree in Crowdsourcing}

\subsection{System Model}
Consider a classification problem to be solved via crowdsourcing. Suppose there is a set of objects $\mathcal O$, and each object within the set needs to be classified to a class $c_i \in \mathcal{C}$, $i\in \{1,2,\dots,N\}$. The prior probability that an object in $\mathcal O$ belongs to $c_i$ is denoted as $p(c_i)$. An unknown object passes through a series of simple tests (nodes in the decision tree) until it reaches a leaf node and gets classified. We consider that each test $T_m\in \{T_1,T_2,\dots,T_M\}$ provides a  binary output for a subset of $\mathcal O$, thus partitioning the subset of input objects into two output subsets. If an object belonging to $c_i$ gets mis-categorized at test $T_m$, a misclassification will happen in the end and this corresponding error probability is demoted by $p_{i,m}$. Table \ref{my-label} gives an example of test statistics and Fig. \ref{fig1} gives two possible testing algorithms. As indicated by Table \ref{my-label}, tests $\{T_i\}_{i=1}^4$ can bifurcate the entire set $\mathcal O$ and $T_5$ can only bifurcate a subset of objects belonging to the classes $\{c_1,c_2,c_3,c_5\}$. Assuming that all the tests have the same error probability $p_{i,m}=0.05$, the final misclassification probabilities in Fig. \ref{fig1}(a) and Fig. \ref{fig1}(b) are $0.068$ and $0.05$ respectively. Thus, we can see that even though the same set of tests are employed, different decision tree structures (ordering of tests) have different  probabilities of mis-classification. Our goal is to build a decision tree that minimizes the mis-classification probability.
\begin{table}[t]
	\centering
	\caption{Decision Model}
	\label{my-label}
	\begin{tabular}{|l|l|l|l|l|l|}
		\hline
		\diagbox{Test}{Class} & $c_1$ & $c_2$ & $c_3$ & $c_4$ & $c_5$ \\ \hline
		$p(c_i)$ & 0.20 & 0.05 & 0.10 & 0.60 & 0.05 \\ \hline
		$T_1$ & 0 & 0 & 0 & 1 & 0 \\ \hline
		$T_2$ & 1 & 0 & 0 & 1 & 1 \\ \hline
		$T_3$ & 0 & 1 & 0 & 0 & 1 \\ \hline
		$T_4$ & 0 & 1 & 0 & 1 & 1 \\ \hline
		$T_5$ & 0 & 1 & 1 & - & 1 \\ \hline
	\end{tabular}
\end{table}

\begin{figure}
	\centering
	\subfigure[Algorithm 1]  
	{
		\begin{tikzpicture}[level distance=0.8cm,
		level 2/.style={sibling distance=2cm},
		level 3/.style={sibling distance=1cm}]
		\tikzstyle{first}=[rectangle,draw]
		\tikzstyle{second}=[circle,draw]
		
		\node [first] {$c_1,c_2,c_3,c_4,c_5$} [->]
		child{node [second] {$T_3$}
			child { node [first]{$c_1,c_3,c_4$}child{
					node [second]{$T_4$} 
					child{node [first]{$c_1,c_3$}
						child { node [second]{$T_5$} 
							child{node[first]{$c_1$}}
							child{node[first]{$c_3$}} 
					}}
					child { node [first]{$c_4$} }	
			}}
			child {
				node [first] {$c_2,c_5$}
                child { node [second] {$T_2$}
				child {  node [first]{$c_2$} }
				child {  node [first]{$c_5$} }
			}}
		};
		\end{tikzpicture}
	}  
	% The only difference is here, where I have commented out an empty line.
	\hspace{0.2in}
	\subfigure[Algorithm 2]  
	{
		\begin{tikzpicture}[level distance=0.8cm,
		level 2/.style={sibling distance=2cm},
		level 3/.style={sibling distance=1cm},
		level 4/.style={sibling distance=1.5cm},]
		\tikzstyle{first}=[rectangle,draw]
		\tikzstyle{second}=[circle,draw]
		
		\node [first] {$c_1,c_2,c_3,c_4,c_5$} [->]
		child{node [second] {$T_1$}
			child { node [first]{$c_1,c_2,c_3,c_5$}
				child{
					node [second]{$T_5$}
					child{node [first]{$c_1$}}
					child { node [first]{$c_2,c_3,c_5$}
						child {node[second]{$T_3$}
							child{node [first]{$c_3$}}
							child{node [first]{$c_2,c_5$}
								child {node [second] {$T_2$}
									child{node [first]{$c_2$}}
									child{node [first] {$c_5$}}
				}}}}}
			}
			child {
				node [second] {$c_4$}
			}
		};
		\end{tikzpicture}
	}
	\caption{Two testing algorithms}
    \label{fig1}
    \vspace{-0.5cm}
\end{figure}
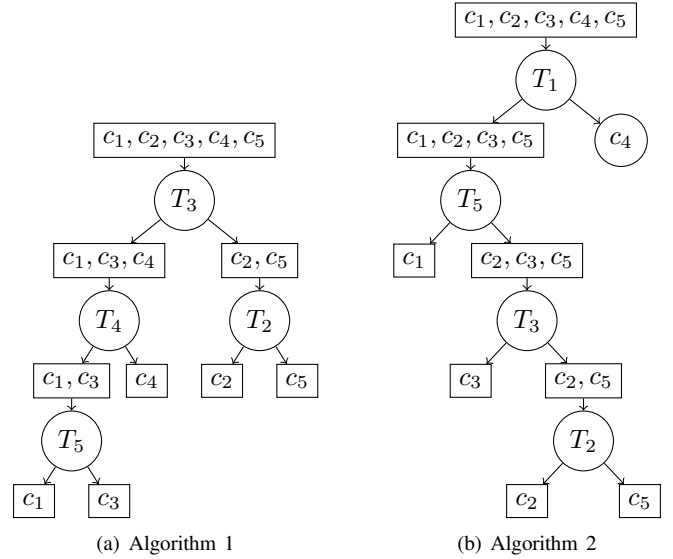
\begin{figure}
	\centering	
    \vspace{-0.3cm}
    \includegraphics[width=.34\textwidth]{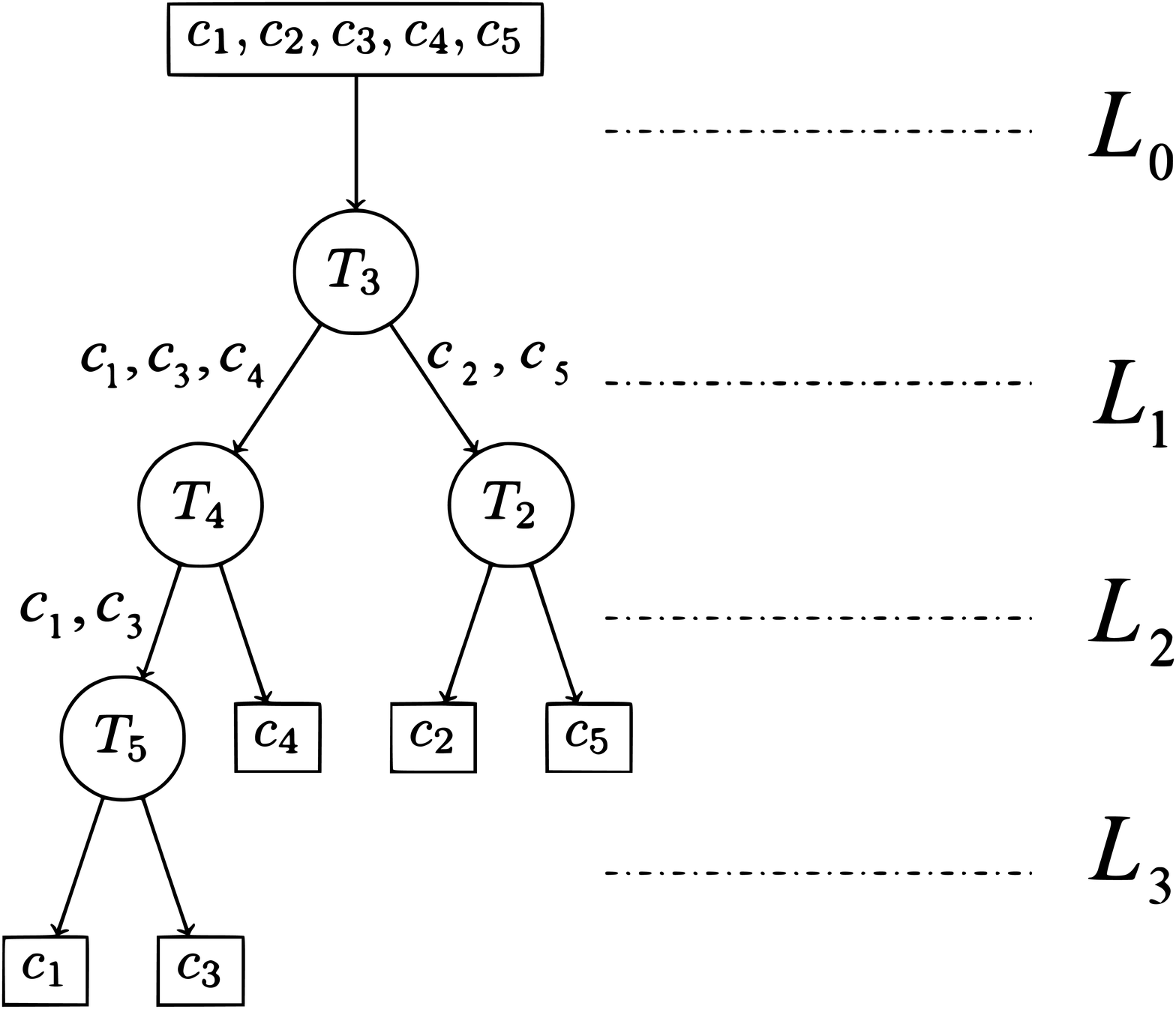}
	\caption{Illustration of Test Levels}
    \vspace{-0.6cm}
\end{figure}

Define the test level $L_d$, $d\in\{0,1,\dots,D\}$ as in Fig. 2, where $D$ is the depth of the tree structure. At each level $L_d$, define the partitions of classes induced by tests applied so far to be $\gamma_d=\{\gamma_{d1},\gamma_{d2},\dots,\gamma_{d|\gamma_d|}\}$, where $|\gamma_d|$ is the cardinality of the partition set $\gamma_d$ and it implies the degree of completion of the classification task. A larger $|\gamma_d|$ indicates closer to the completion of classification. In the example given in Fig. 2, we have:
\vspace{-0.15cm}
\begin{align*}
\gamma_0&=\{\{c_1,c_2,c_3,c_4,c_5\}\}\\
\gamma_1&=\{\{c_1,c_3,c_4\},\{c_2,c_5\}\}\\
\gamma_2&=\{\{c_1,c_3\},\{c_4\},\{c_2\},\{c_5\}\}\\
\gamma_3&=\left\{\{c_1\},\{c_2\},\{c_3\},\{c_4\},\{c_5\}\right\}
\end{align*}

Note $\gamma_D$ is where each class has been individually distinguished. Let $\Gamma_d$ denote the partition induced in $\gamma_d$.  We define the entropy at level $L_d$ as:
\begin{align}
H(L_d) = H(\Gamma_D|\Gamma_d)
&= -\sum\limits_{n,k}p(\gamma_{Dn},\gamma_{dk})\text{log}_2 p(\gamma_{Dn}|\gamma_{dk})\nonumber\\
&= \sum\limits_{k}p(\gamma_{dk})H(\Gamma_D|\Gamma_d=\gamma_{dk})
\end{align}
where $p(\gamma_{Dn},\gamma_{dk})$ is the joint probability of partitions $\gamma_{Dn}$ and $\gamma_{dk}$. Following this definition, $H(L_0)=-\sum\limits_{i=1}^{N}p(c_i){\log}_2p(c_i)$, and $H(L_D)=0$. The entropy at each level will be exploited in choosing the tests for the next level so as to minimize the final probability of mis-classification. 

\section{Proposed Decision Tree Design Algorithms}
In this section, we focus on algorithms for decision tree design.
We use two types of approximations, namely, addition approximation to minimize the upper bound  of mis-classification probabilities, and multiplication approximation to maximize the lower bound of correct classification probabilities, respectively. {Previous work \cite{hartmann1982application}, with the objective of minimizing the upper bound of test cost, e.g., memory, execution time, does not consider error in tests (noisy tests), which is very different from our paper.}

\subsection{Bounding the probability of mis-classification}
In a decision tree, the probability of mis-classification error is given by :
\begin{equation}
P_m=\sum\limits_{i=1}^{N}p(c_i)\left(1-\prod\limits_{d=1}^{D}(1-p^*_{i,d})\right),
\end{equation}
 where $p^*_{i,d}$ is the error probability associated with the unknown object belonging to $c_i$ as it traverses the node between levels $L_{d-1}$ and $L_{d}$. Note that if an object does not pass through a level of the tree, the corresponding error probability is $0$.
 
Typically, the error probability for each test is small. Otherwise, the corresponding test should be replaced by a better test to reduce the error probability. Since the error probability of each test is small, the probability of mis-classification can be approximated by dropping out the higher order terms as 
 \begin{equation}
 P_m\!\approx\!\sum\limits_{i=1}^{N}p(c_i)\sum\limits_{d=1}^{D}p^*_{i,d}.
 \end{equation}
Thus, we have the additive approximation
\begin{align}\label{add_approx}
P_m\!\approx\!\sum\limits_{i=1}^{N}p(c_i)\sum\limits_{d=1}^{D}p^*_{i,d}\!=\!\sum\limits_{d=1}^{D}\sum\limits_{i=1}^{N}p(c_i)p^*_{i,d}=\sum\limits_{d=1}^{D}g(d-1,d)  ,       
\end{align}
where $g(d-1,d)$ represents the error probability induced by tests between level $L_{d-1}$ and level $L_d$.
Recalling the definition of $H(L_d)$ in (1), and using the fact that $H(L_D)=0$, we can write:
\begin{align}\label{add_entro}
H\!(\!L_0\!)&\!=\!H\!(\!L_0\!)\!-\!H\!(\!L_1\!)\!+\!H\!(\!L_1\!)\!-\!H\!(\!L_2\!)\!+\!\dots\!+\!H\!(\!L_{D-1}\!)\!-\!H\!(\!L_D\!)\!\nonumber\\
&=\sum\limits_{d=1}^{D}\frac{H(L_{d-1})-H(L_d)}{g(d-1,d)}g(d-1,d)\nonumber\\
&=\sum\limits_{d=1}^{D}F^m(d-1,d)g(d-1,d),
\end{align}
where $F^m(d-1,d)=\frac{H(L_{d-1})-H(L_d)}{g(d-1,d)}$ is the metric we use for decision tree construction. It is the reduction in entropy from $L_{d-1}$ to $L_{d}$, divided by the error probability induced between these two levels. Essentially, it indicates the sensitivity to error for reducing uncertainty in decision tree design at a certain level. Define $F^m_{min}=\min\limits_{d=1,\dots,D}F^m(d-1,d)$, and $F^m_{max}=\max\limits_{d=1,\dots,D}F(d-1,d)$. Due to the fact that $H(L_{d-1})-H(L_d)\geq 0$, and $g(d-1,d)>0$, it follows that $F(d-1,d)\geq 0$. Substituting \eqref{add_approx} into \eqref{add_entro}, we can have
\begin{align*}
F^m_{min}P_m\leq H(L_0)\leq F^m_{max}P_m,
\end{align*}
which leads to
\begin{align*}
\frac{H(L_0)}{F^m_{max}}\leq P_m \leq \frac{H(L_0)}{F^m_{min}}.
\end{align*}
Since our goal is to minimize $P_m$, we are interested in minimizing the upper bound $\frac{H(L_0)}{F^m_{min}}$. Since $H_{L_0}$ is fixed, we need to maximize $F^m_{min}=\min\limits_{d=1,\dots,D}F^m(d-1,d)$. During the construction of testing algorithm, it is sufficient to maximize  each of $F^m(d-1,d)=\frac{H(L_{d-1})-H(L_d)}{g(d-1,d)}$, $d\in\{1,2,\dots,D\}$. When we construct the decision tree from level $d-1$ to level $d$, we select the tests that maximize the value $F^m(d-1,d)$, and the decision tree construction step ends when it reaches the $D$-th level.

\subsection{Bounding the probability of correct classification}
In this section, we focus on decision tree design to maximize the probability of correct classification, which can be written as
\begin{align}
P_c=\sum\limits_{i=1}^{N}p(c_i)\prod\limits_{d=1}^{D}(1-p^*_{i,d}).
\end{align}
Since the effect of higher order terms is negligible as typically they are small, we approximate $P_c$ as
\begin{align}
P_c&\approx\prod\limits_{d=1}^{D}\sum\limits_{i=1}^{N}p(c_i)(1-p^*_{i,d})=\prod\limits_{d=1}^{D} b(d-1,d),
\end{align}
%This approximation from (6) to (7) is by taking advantage of $p(c_i)\geq 0$ for $i\in\{1,2,\dots,N\}$, $\sum_{i=1}^{N}p(c_i)=1$, as well as $1-p_{i,d}^*\in(0,1)$, so that the differences of higher order terms in (6) and (7) average out. 
where $b(d-1,d)$ represents the probability of correct classification between level $L_{d-1}$ and $L_d$, $d\in \{1, 2, \dots, D\}$.

Then, we provide the entropy in the multiplicative form as
\begin{align}
H(L_0)+1&=\frac{H(L_0)+1}{H(L_1)+1}\times \frac{H(L_1)+1}{H(L_2)+1}	
\times ...\times \frac{H(L_{D-1})+1}{H(L_{D})+1}\nonumber\\
&=\prod _{d=1}^{D}\frac{\frac{H(L_{d-1})+1}{H(L_d)+1}}{b(d-1,d)}b(d-1,d)\nonumber\\
&=\prod_{d=1}^{D}F^c(d-1,d)b(d-1,d),
\end{align}
where $F^c(d-1,d)=\frac{\frac{H(L_{d-1})+1}{H(L_d)+1}}{b(d-1,d)}$ is the metric based on which we select tests. It is the generalized entropy ratio of levels $L_{d-1}$ and $L_{d}$, divided by the probability of correct classification between these two levels. Essentially, it indicates the degree of reduction in uncertainty when the test correctly bifurcates the objects. Define $F^c_{min}=\min\limits_{d=1,\dots,D}F^c(d-1,d)$, and $F^c_{max}=\max\limits_{d=1,\dots,D}F^c(d-1,d)$. Since  $F^c(d-1,d)\geq 0$, substitute (7) into (8) and we have
\begin{align*}
F^c_{min}P_c\leq H(L_0)+1\leq F^c_{max}P_c,
\end{align*}
which leads to
\begin{align*}
\frac{H(L_0)+1}{F^c_{max}}\leq P_c \leq \frac{H(L_0)+1}{F^c_{min}}.
\end{align*}
As we desire to maximize the probability of correct classification $P_c$, and thus are interested in maximizing its lower bound which is $\frac{H(L_0)+1}{F^c_{max}}$. Since $H(L_0)+1$ is fixed, we need to minimize $F^c_{max}$. During the construction of the decision tree, it is sufficient to select the tests that minimize the value $F^c(d-1,d)=\frac{\frac{H(L_{d-1})+1}{H(L_d)+1}}{b(d-1,d)}$ from level $d-1$ to level $d$. 

The additive approximation is obtained by discarding second to $D$th order terms of $p^*_{i,d}$, while the multiplicative approximation discards $D$th order of $p^*_{i,d}$. Thus, multiplicative approximation is more accurate than additive approximation. However, the tightness of the bounds on probability of correct classification in the multiplicative method depends on the metric $\frac{H(L_{d-1})+r}{H(L_d)+r}$. In this paper, we choose $r=1$, which might not be optimal.

In our simulation with the experimental setting as shown in Table 1, and when we assume that all the tests have the same error probability $p^*$, both methods give the same resulting decision tree (testing algorithm)  shown in Fig. 1(b). Fig. 3 shows the efficiency of the proposed decision tree design algorithm by comparing its probability of mis-classification (blue curve) with the case where tests are in a random order (red curve). As we can see from the figure, the performance is significantly improved with our methods, { and the improvement becomes more prominent as $p^*$ increases.}

\begin{figure}
	\centering
	\includegraphics[width=.4\textwidth]{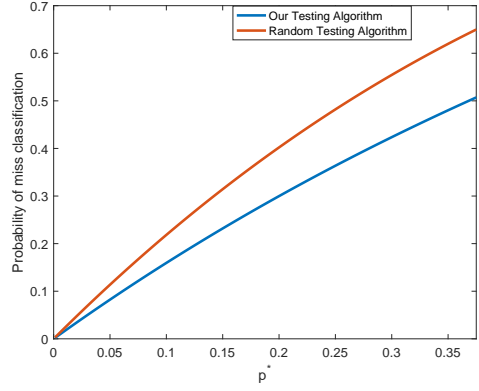}
     \vspace{-0.3cm}
	\caption{Probability of mis-classification when $p^*$ increases}
     \vspace{-0.5cm}
\end{figure}
 
\section{Worker Assignment}
After designing the decision tree, the next step is to assign the available crowd  workers to the nodes of the decision tree. The naive and the most costly approach will be to have all available workers answer questions corresponding to each node. This will mean that the number of questions answered will be $M_0N_0$, where $M_0$ is the number of nodes in the decision tree and $N_0$ is the total number of workers. The goal in this section is to investigate the trade-off between the saving in the number of questions answered (cost)  and the degradation in performance as well as to develop an efficient algorithm to assign subsets of workers to different nodes of the tree. In particular, each node must have at least one worker assigned to it; the goal is to find an algorithm to optimally distribute remaining crowd workers among the nodes of the decision tree. When subgroups of workers are assigned to perform different tests at individual nodes, the workers' local decisions are collected by a fusion center (FC). Majority voting is used in this paper for decision fusion for crowdsourcing. In a subgroup of workers with size $n=2k+1$, $k = 0,1,\dots$, each worker completes the same test that will produce binary results $0$ or $1$. The probability of error of the $i$th worker for the corresponding test is $p^i_e$. In majority rule, FC will follow the decisions made by the majority. That is, if at least $k+1$ workers declare $0$ to be the result, FC will decide $0$; otherwise, it will decide that $1$ is true. For a certain test, we provide the worker assignment scheme.
\begin{proposition}
	Suppose the expected probability of error of each worker for a certain test is $\mathbb{E}(p_e^i)=p_e$. When $p_e<0.5$, the probability of error at FC $f_e(k)$ is a decreasing function of $k$. The reduction in probability of error at FC decreases as well, as $k$ increases, i.e. $\left | f_e(k_1+1)-f_e(k_1) \right |\leq \left | f_e(k_2+1)-f_e(k_2) \right |$ for $k_1>k_2\geq 0$
\end{proposition}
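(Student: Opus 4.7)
The plan is to derive a closed-form expression for the single-step decrement $\Delta(k) := f_e(k)-f_e(k+1)$ and to deduce both claims (monotonicity and diminishing returns) from it. Treating each worker as contributing an independent error with the effective rate $p_e$, majority voting on $2k+1$ votes is wrong iff at least $k+1$ workers err, so $f_e(k) = \mathbb{P}(B_{2k+1} \ge k+1)$ where $B_n \sim \mathrm{Bin}(n,p_e)$.

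To compute $\Delta(k)$ I would couple the $(2k+1)$-worker ensemble to the $(2k+3)$-worker ensemble by writing $B_{2k+3} = B_{2k+1}+B_2'$ with $B_2' \sim \mathrm{Bin}(2,p_e)$ independent, and then condition on $B_{2k+1}$. The fusion decision changes only at the two boundary outcomes: when $B_{2k+1}=k+1$ (previously wrong, now correct iff $B_2'=0$) and when $B_{2k+1}=k$ (previously correct, now wrong iff $B_2'=2$). Using the symmetry $\binom{2k+1}{k}=\binom{2k+1}{k+1}$ to combine the two contributions, the expression should collapse to
\begin{equation*}
\Delta(k) \;=\; \binom{2k+1}{k+1}\bigl[p_e(1-p_e)\bigr]^{k+1}(1-2p_e).
\end{equation*}
Every factor on the right is positive when $p_e<\tfrac{1}{2}$, which yields the first claim $f_e(k+1)<f_e(k)$.

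For the diminishing-returns claim it suffices to show $\Delta(k+1)\le\Delta(k)$. Dividing and simplifying the binomial and geometric factors gives
\begin{equation*}
\frac{\Delta(k+1)}{\Delta(k)} \;=\; \frac{\binom{2k+3}{k+2}}{\binom{2k+1}{k+1}}\,p_e(1-p_e) \;=\; \frac{2(2k+3)}{k+2}\,p_e(1-p_e),
\end{equation*}
and I would finish with the AM--GM bound $p_e(1-p_e)\le\tfrac{1}{4}$, which turns the ratio into $(2k+3)/(2k+4)<1$, uniformly in $k$ and in $p_e\in[0,\tfrac12)$. Chaining the one-step inequality across any gap $k_1>k_2\ge 0$ delivers $|f_e(k_1+1)-f_e(k_1)| \le |f_e(k_2+1)-f_e(k_2)|$ as required.

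The main obstacle is the derivation of the $\Delta(k)$ identity: the conditioning must be organized so that only the two boundary atoms $\{B_{2k+1}=k\}$ and $\{B_{2k+1}=k+1\}$ contribute, and the symmetry of the central binomial coefficients has to be used in just the right place to extract the clean factor $(1-2p_e)$ that carries both the sign information and the later cancellation with $p_e(1-p_e)$ in the ratio. Everything else is routine algebra. If workers are only on-average identical, as the phrasing $\mathbb{E}(p_e^i)=p_e$ suggests, the same argument applies once one verifies via a Schur-concavity / symmetrization argument on the vector $(p_e^1,\ldots,p_e^{2k+1})$ that the heterogeneous majority-error is bounded by its equal-$p_e$ counterpart.
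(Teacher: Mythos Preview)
Your argument is correct and takes a genuinely different, more elementary route than the paper. The paper represents the fusion error through the regularized incomplete beta function, $f_e(k)=I_{p_e}(k+1,k+1)$, extends $k$ to a continuous parameter, and differentiates in $j=k+1$; the sign of the derivative is read off from a double-integral identity, and the diminishing-returns part is argued by noting that the integrand $(t-t^2)^j(s-s^2)^j$ shrinks with $j$. Your coupling stays entirely in the discrete world, produces the exact closed form $\Delta(k)=\binom{2k+1}{k+1}[p_e(1-p_e)]^{k+1}(1-2p_e)$, and reduces both claims to the single ratio computation $\Delta(k+1)/\Delta(k)=\tfrac{2(2k+3)}{k+2}\,p_e(1-p_e)\le\tfrac{2k+3}{2k+4}<1$. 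Besides avoiding special functions, your approach yields a quantitative, $p_e$-uniform rate of decay, and it sidesteps a subtlety in the paper's second step, where the growing prefactor $B^{-2}(1;j,j)$ must be controlled alongside the shrinking integrand. The paper's continuous viewpoint would, if made fully rigorous, cover non-integer $k$; for the integer statement actually claimed, your discrete argument is tighter and self-contained. Your closing remark on heterogeneous workers via Schur-concavity goes beyond what the paper attempts, which simply treats the workers as i.i.d.\ with common error rate $p_e$.
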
 
\begin{proof}
See Appendix A.
\end{proof}
{Under the assumption of Proposition 1: $\mathbb{E}(p_e^i)=p_e$, after we have constructed a testing algorithm, for example the one shown in Fig. 1(b), each test is assigned a randomly chosen worker.} After that, we assume that we have a group of additional $n = 2K$ workers available to reduce the error probabilities of one or more tests. Let $2k_m, m\in\{1,2,\dots,M\}$ be the number of workers assigned to test $T_m$. By doing so, we ensure that the number of workers performing test $T_m$ is odd, and $\sum\limits_{m=1}^{M}k_m =K$. We address the problem of how to assign these $2K$ workers to different tests, i.e., to determine the values of $\{k_1,k_2,\dots,k_M\}$, such that we can achieve minimum probability of mis-classification .

From the result of Proposition 1, as more workers are assigned to the same test, the rate of reduction in error probability decreases. Thus, we are encouraged to allocate two workers at a time to a certain test, to guarantee the odd number of workers for each test, and to ensure the maximal rate of reduction in error probability each time.{ Using the methods proposed in the previous section, we can construct the decision tree and find the level $d^\prime$ that has the minimal $F^m(d^\prime-1, d^\prime)$ or maximal $F^c(d^\prime-1,d^\prime)$ (both decision tree construction algorithms provide the same result). For the tests between level $L_{d^\prime-1}$ and $L_{d^\prime}$, we add two workers to the test that gives most increase in $F^m(d^\prime-1, d^\prime)$ or most decrease in $F^c(d^\prime-1,d^\prime)$.}  We provide the following worker assignment algorithm:
\begin{algorithm}
	\caption{Worker Assignment}\label{alg:euclid}
	\begin{algorithmic}[1]
		\Procedure{Assign $2K$ workers}{}
        \State Initialize $k_1=k_2=\dots=k_M=0$.
		\While{$n=2K>0$}
		\State Find $d'$.
		\State From $L_{d'-1}$ to $L_{d'}$, add two workers to                $T_m$ that gives most increase in $F^m(d^\prime-1, d^\prime)$, or most decrease in $F^c(d^\prime-1,d^\prime)$. 
        \State $k_m\leftarrow k_m+1$
		\State Update the value $F^m(d'-1,d')$, or $F^c(d'-1,d')$.
		\State $K \leftarrow K-1$.
		\EndWhile\label{euclidendwhile}
		\State \textbf{end}
		\EndProcedure
	\end{algorithmic}
\end{algorithm}

\begin{figure}
	\centering	\includegraphics[width=.4\textwidth]{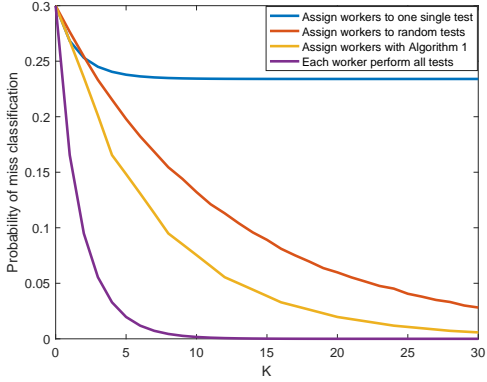}
    \vspace{-0.4cm}
	\caption{Probability of mis-classification as a function of $K$}
    \vspace{-0.5cm}
\end{figure}
In our simulations, each worker has an error probability of $p_e =0.2$ for all the tests, and Fig. 4 shows the probability of mis-classification when the number of workers increases. The blue curve represents the case when we assign all the workers to a single test (randomly chosen); the red curve indicates the scenario where each worker is randomly assigned to a test, and the yellow curves represents the proposed worker assignment rule associated with metric $F^m(d-1,d)$. We can see from the figure that one should not assign workers in a highly unbalanced fashion as is indicated by the blue curve. Random worker assignment achieves better performance, which is outperformed by our proposed method. The purple curve represents the scenario where each worker participates in all the tests in our decision tree. Though it has the best performance, the cost (number of tests answered by workers) induced is $M$ times higher, where $M$ is the number of tests. After $K>25$, we can see that our algorithm achieves comparable performance as the purple curve, however, with a significantly lower cost.

\section{Conclusion}
This work presented a novel sequential paradigm for crowdsourced classification and also addressed the test ordering problem. With limited knowledge of worker's reliability in performing imperfect tests, we provided a greedy decision tree design to minimize the probability of mis-classification. Two different methods were used to approximate the probabilities of mis-classification and correct classification. We also investigated the worker assignment problem, by studying the assignment of a limited number of workers to different tests. Numerical results showed the superiority of our testing algorithm, as well as the efficiency of the worker assignment strategy. While our greedy level-by-level decision tree construction only achieves local optimality, in future work, we will explore the possibility of obtaining globally optimal solutions.

%++++++++++++++++++++++++++++++++++++++++
% References section will be created automatically 
% with inclusion of "thebibliography" environment
% as it shown below. See text starting with line
% \begin{thebibliography}{99}
% Note: with this approach it is YOUR responsibility to put them in order
% of appearance.

\bibliography{refer}

% Generated by IEEEtran.bst, version: 1.14 (2015/08/26)
\begin{thebibliography}{10}
\providecommand{\url}[1]{#1}
\csname url@samestyle\endcsname
\providecommand{\newblock}{\relax}
\providecommand{\bibinfo}[2]{#2}
\providecommand{\BIBentrySTDinterwordspacing}{\spaceskip=0pt\relax}
\providecommand{\BIBentryALTinterwordstretchfactor}{4}
\providecommand{\BIBentryALTinterwordspacing}{\spaceskip=\fontdimen2\font plus
\BIBentryALTinterwordstretchfactor\fontdimen3\font minus
  \fontdimen4\font\relax}
\providecommand{\BIBforeignlanguage}[2]{{%
\expandafter\ifx\csname l@#1\endcsname\relax
\typeout{** WARNING: IEEEtran.bst: No hyphenation pattern has been}%
\typeout{** loaded for the language `#1'. Using the pattern for}%
\typeout{** the default language instead.}%
\else
\language=\csname l@#1\endcsname
\fi
#2}}
\providecommand{\BIBdecl}{\relax}
\BIBdecl

\bibitem{7747496}
Q.~Li, A.~Vempaty, L.~R. Varshney, and P.~K. Varshney, ``Multi-object
  classification via crowdsourcing with a reject option,'' \emph{IEEE Trans.
  Signal Process.}, vol.~65, no.~4, pp. 1068--1081, Feb 2017.

\bibitem{li2017optimal}
Q.~Li and P.~K. Varshney, ``Optimal crowdsourced classification with a reject
  option in the presence of spammers,'' in \emph{IEEE ICASSP}, 2018.

\bibitem{6784318}
A.~Vempaty, L.~R. Varshney, and P.~K. Varshney, ``Reliable crowdsourcing for
  multi-class labeling using coding theory,'' \emph{IEEE J. Sel. Topics Signal
  Process.}, vol.~8, no.~4, pp. 667--679, Aug 2014.

\bibitem{Li:2017:CRC:3055601.3055607}
Q.~Li and P.~K. Varshney, ``Does confidence reporting from the crowd benefit
  crowdsourcing performance?'' in \emph{SocialSens'}, 2017, pp. 49--54.

\bibitem{buhrmester2011amazon}
M.~Buhrmester, T.~Kwang, and S.~D. Gosling, ``Amazon's mechanical turk: A new
  source of inexpensive, yet high-quality, data?'' \emph{Perspectives on
  psychological science}, vol.~6, no.~1, pp. 3--5, 2011.

\bibitem{dekel2009vox}
O.~Dekel and O.~Shamir, ``Vox populi: Collecting high-quality labels from a
  crowd,'' 2009.

\bibitem{ipeirotis2010quality}
P.~G. Ipeirotis, F.~Provost, and J.~Wang, ``Quality management on amazon
  mechanical turk,'' in \emph{ACM SIGKDD Workshop}, 2010, pp. 64--67.

\bibitem{ho2013adaptive}
C.-J. Ho, S.~Jabbari, and J.~W. Vaughan, ``Adaptive task assignment for
  crowdsourced classification,'' in \emph{ICML}, 2013, pp. 534--542.

\bibitem{roy2015task}
S.~B. Roy, I.~Lykourentzou, S.~Thirumuruganathan, S.~Amer-Yahia, and G.~Das,
  ``Task assignment optimization in knowledge-intensive crowdsourcing,''
  \emph{The VLDB Journal}, vol.~24, no.~4, pp. 467--491, 2015.

\bibitem{liu2012cdas}
X.~Liu, M.~Lu, B.~C. Ooi, Y.~Shen, S.~Wu, and M.~Zhang, ``Cdas: a crowdsourcing
  data analytics system,'' \emph{VLDB Endowment}, vol.~5, no.~10, pp.
  1040--1051, 2012.

\bibitem{ho2012online}
C.-J. Ho and J.~W. Vaughan, ``Online task assignment in crowdsourcing
  markets.'' in \emph{AAAI}, vol.~12, 2012, pp. 45--51.

\bibitem{karger2011iterative}
D.~R. Karger, S.~Oh, and D.~Shah, ``Iterative learning for reliable
  crowdsourcing systems,'' in \emph{NIPS}, 2011, pp. 1953--1961.

\bibitem{mitchell1997machine}
T.~M. Mitchell \emph{et~al.}, ``Machine learning. 1997,'' \emph{Burr Ridge, IL:
  McGraw Hill}, vol.~45, no.~37, pp. 870--877, 1997.

\bibitem{hartmann1982application}
C.~Hartmann, P.~Varshney, K.~Mehrotra, and C.~Gerberich, ``Application of
  information theory to the construction of efficient decision trees,''
  \emph{IEEE Trans. Inf. Theory}, vol.~28, no.~4, pp. 565--577, 1982.

\bibitem{quinlan1986induction}
J.~R. Quinlan, ``Induction of decision trees,'' \emph{Machine Learning},
  vol.~1, no.~1, pp. 81--106, 1986.

\bibitem{quinlan2014c4}
------, \emph{C4. 5: programs for machine learning}.\hskip 1em plus 0.5em minus
  0.4em\relax Elsevier, 2014.

\bibitem{breiman2017classification}
L.~Breiman, \emph{Classification and regression trees}.\hskip 1em plus 0.5em
  minus 0.4em\relax Routledge, 2017.

\bibitem{lim2000comparison}
T.-S. Lim, W.-Y. Loh, and Y.-S. Shih, ``A comparison of prediction accuracy,
  complexity, and training time of thirty-three old and new classification
  algorithms,'' \emph{Machine Learning}, vol.~40, no.~3, pp. 203--228, 2000.

\end{thebibliography}
\bibliographystyle{IEEEtran}
\begin{appendices} \section{Proof of Proposition 1}
	\begin{proof}
		The cumulative distribution function of random variable $x$ from a binomial distribution with expected success probability $p_s$ can be expressed using the regularized incomplete beta function:
		\begin{equation*}
			F(m,n;p_s)= Pr(x\leq m) = I_{(1-p_s)}(n-m,m+1)
		\end{equation*}
		where
			$I_r(a,b)=\frac{B(r;a,b)}{B(1;a,b)}$
		and
$
			B(r;a,b) = \int\limits_0^r t^{a-1}(1-t)^{b-1}dt
$.
		
		In majority voting with $n=2k+1$ workers, each worker has an expected probability of success $1-p_e$, the probability of miss classification at FC can be expressed as: 
		\begin{align*}
			f_e(k) &= Pr(x\leq k) = F(k,2k+1,1-p_e)\\
                   &=I_{p_e}(k+1,k+1)
		\end{align*}
		
		Note that now $k$ can be any real value $k\geq 0$.
 		Taking partial derivative of $I_{p_e}(j,j)$ with respect to $j$ yields
 		\begin{align}
 			&\frac{dI_{p_e}(j,j)}{dj}\nonumber \\
             &=B^{-2}(1;j,j)\times \int\limits_{0}^{p_e}\int\limits_{0}^{1}(t-t^2)^j(s-s^2)^j\ln\frac{t-t^2}{s-s^2}dsdt \label{step1}\\
 			&=B^{-2}(1;j,j)\times \int\limits_{0}^{p_e}\int\limits_{p_e}^{1-p_e}(t-t^2)^j(s-s^2)^j\ln\frac{t-t^2}{s-s^2}dsdt \label{step2}
 		\end{align}
 		From \eqref{step1} to \eqref{step2}, we use the symmetry of $t-t^2$ with respect to $0.5$, and the fact $p_e<0.5$. Finally, notice that $s-s^2>t-t^2>0$ in the interval $s\in (p_e,1-p_e)$, and $t\in(0,p_e)$, thus $\ln\frac{t-t^2}{s-s^2}<0$ and $\frac{dI_{p_e}(j,j)}{dJ}$ is strictly negative. Since $j=k+1$, it follows that $f_e(k)$ is decreasing with respect to $k$.
         Besides, as $j$ increases, the magnitude of $\frac{dI_{p_e}(j,j)}{dj}$ strictly decreases because $|t-t^2|\leq1$ and $|s-s^2|\leq 1$. Thus, the magnitude of derivative decreases as $k$ increases.

 	\end{proof}
 \end{appendices}

\end{document}